\newtheorem{prop}{Proposition} 
\begin{document}
\title{Disguised-Nets: Image Disguising for Privacy-preserving Outsourced Deep Learning\thanks{Partially supported by the National Science Foundation under Grant 1245847}}

\author{Sagar Sharma, Keke Chen\inst{}}
 \institute{Data Intensive Analysis and Computing (DIAC) Lab, Kno.e.sis Center, \\Wright State University
\email{\{sharma.74,keke.chen\}@wright.edu}}

\maketitle
\begin{abstract}
Deep learning model developers often use cloud GPU resources to experiment with large data and models that need expensive setups. However, this practice raises privacy concerns. Adversaries may be interested in: 1) personally identifiable information or objects encoded in the training images, and 2) the models trained with sensitive data to launch model-based attacks. Learning deep neural networks (DNN) from encrypted data is still impractical due to the large training data and the expensive learning process. A few recent studies have tried to provide efficient, practical solutions to protect data privacy in outsourced deep-learning. However, we find out that they are vulnerable under certain attacks. In this paper, we specifically identify two types of unique attacks on outsourced deep-learning: 1) the visual re-identification attack on the training data, and 2) the class membership attack on the learned models, which can break existing privacy-preserving solutions. We develop an image disguising approach to address these attacks and design a suite of methods to evaluate the levels of attack resilience for a privacy-preserving solution for outsourced deep learning. The experimental results show that our image-disguising mechanisms can provide a high level of protection against the two attacks while still generating high-quality DNN models for image classification.
\end{abstract}

\vspace{-0.35cm}
\section{Introduction}
Deep Neural Networks (DNN) generate robust modeling results across diverse domains such as image classification, natural language processing, speech recognition, and recommendation systems ~\cite{liu17dnn}. However, DNN training is resource intensive and time-consuming. Model developers often utilize cloud GPU resources to train large-scale models. A major concern is the privacy of the sensitive data and the trained models that may be stolen, traded, or possibly used to explore the private training data. 

One possible approach to addressing the privacy issue in outsourcing DNN is by training the models over encrypted data. However, due to the large training data and expensive training process, cryptographic approaches are too expensive to be practical as shown in a recent study on training small scale neural networks ~\cite{mohassel17}. As a result, cryptographic approaches are practically limited to testing DNNs as in CryptoNets \cite{xie14}, which is much more simpler and less expensive than training a DNN model.

Differential Privacy (DP) has been applied for deep learning in a different setting ~\cite{reza15,abadi16}, where sharing, not hiding, the training data and model is the goal. Abadi et al. ~\cite{abadi16} consider the problem of building DNN models that do not leak to the model consumers the private information specific to any individual training examples. Shokri et al. ~\cite{reza15} study a similar problem where data contributors are distributed. These methods cannot be adapted to the outsourced training scenario where privacy of both data and model is of concern. Furthermore, DP mechanisms result in a significant tradeoff between utility and privacy. For example, with a practical privacy setting, such as $\epsilon=2$, Abadi et al. ~\cite{abadi16} report over $15 - 20 \%$ accuracy reduction in classifying the MNIST dataset. 

A few recent studies try to address the confidentiality problem with the training data. Fan et al. ~\cite{fan18} apply differential privacy in hiding pixel-level details in the sensitive images. However, it does not protect sensitive contents that involve the entire images. In a different work, Li et al.~\cite{li17} propose hiding private data by submitting locally learned shallower neural networks to the cloud for further learning. However, the results show that the content of the intermediate representations is visually identifiable. These studies generally fail in protecting from the simplest attack --- manual visual content inspection by an attacker.

In summary, existing privacy-preserving techniques for deep learning are either too expensive, not designed for the outsourced setting, or vulnerable to the simple visual attacks.

\textbf{Scope and contributions.} 

In this work, we thoroughly study the problems with existing candidate approaches for privacy-preserving outsourced deep learning and propose two novel attacks that a viable solution should address. Our work is focused on image training data and classification tasks. (1) The first attack is the \emph{visual re-identification attack}, where an attacker can visually identify the major content of a protected image. Due to the recent studies ~\cite{krizhevsky17} that shows DNN models have outperformed human experts in image recognition tasks, the attack can be further carried out automatically with a trained ``DNN examiner" model that tries to recognize the contents from protected images. (2) The second attack is the \emph{class-membership attack}. In this attack, an adversary is able to use the model and access the model output, however, does not know the secret parameter settings of the data protection mechanism. The attack goal is to determine whether a class of images was included in the training data. Specifically, for classification tasks, this attack can be carried out by observing the distribution characteristics of the predicted labels for a set of images from the same class, e.g., different face images of the same person in face recognition. For a reasonably performing image classifier, the test images if similar to a class in the training data will have distinct output distributions, i.e., most labels will be of the same class, which is untrue if the test images are dissimilar from any of the training data classes. Most methods that expose the learned models or their outputs are vulnerable to this attack. 

To address these two attacks in the outsourced setting, we present a novel image disguising mechanism that protects both the training data and the learned models. The intuition is that \emph{with appropriately transformed images, the powerful deep learning techniques can still pick up the unique topological/geometric features preserved in the transformed spaces to effectively distinguish the classes of the transformed images}. Our image disguising mechanism combines block-wise permutation and multidimensional transformation to achieve excellent levels of \emph{visual privacy}, an empirical measure designed to evaluate the effectiveness of visual re-identification attack, and strong resilience to the model-based class-membership attack. We summarize our contributions as follows:

\begin{enumerate}
\item We identify two attacks that a viable privacy-preserving solution should consider in outsourced deep learning: the visual re-identification attack and the class-membership attack. We also propose empirical methods for evaluating the effectiveness of such attacks.

\item We design a suite of image disguising mechanisms for image-based DNN learning in the outsourced setting that thwarts both the visual re-identification and the class-membership attacks while preserving information in the transformed space for deriving high-quality models. 
\item We conduct extensive experimental evaluations on several public datasets to show the trade-offs of related parameter settings for the image disguising mechanisms and their resilience to the identified attacks. 
\end{enumerate} 

Next, we briefly outline this paper. Section ~\ref{sec:framework} describes DNN outsourcing, the potential privacy threats, and shortcomings of some existing techniques that target the privacy issue. Then, it introduces the visual re-identification and class-membership attacks and describes the security assumptions for our work. In Section ~\ref{sec:core}, we introduce a suite of image disguising mechanisms that enable privacy-preserving deep learning in the outsourced setting. Section ~\ref{sec:security_analysis} analyzes the security of the proposed mechanisms. Section ~\ref{sec:experim} presents the results for the experimental evaluations of our privacy mechanisms in terms of model quality, resiliency against the two attacks, and related trade-offs. We refer to the most relevant related works in Section ~\ref{sec:related_work} and finally conclude the paper in Section ~\ref{sec:conclusion}.

\vspace{-0.35cm}
\section{ Attacks on Outsourced Deep Learning}\label{sec:framework}
In this section, we describe the setting of outsourced deep learning and define two types of attacks that a viable privacy-preserving solution has to address.

\textbf{General Framework.} DNN learning is resource and time intensive for massive data and larger and intricate architectures such as ResNet ~\cite{kaiming15}. Resource-constrained data owners outsource their data to public cloud providers such as AWS's elastic GPUs to benefit from their high-performance GPU computation resources. Figure ~\ref{fig:framework_generic} shows the general framework for outsourced deep learning: the data owner offloads her training data to the cloud provider and deploys the cloud provider's GPU resources in training complex DNN models. After training, the data owner can either download the learned model for local use or just upload newer testing data to the cloud for prediction. 

\begin{figure} [h]
\centering
\includegraphics[width= 0.45\linewidth]{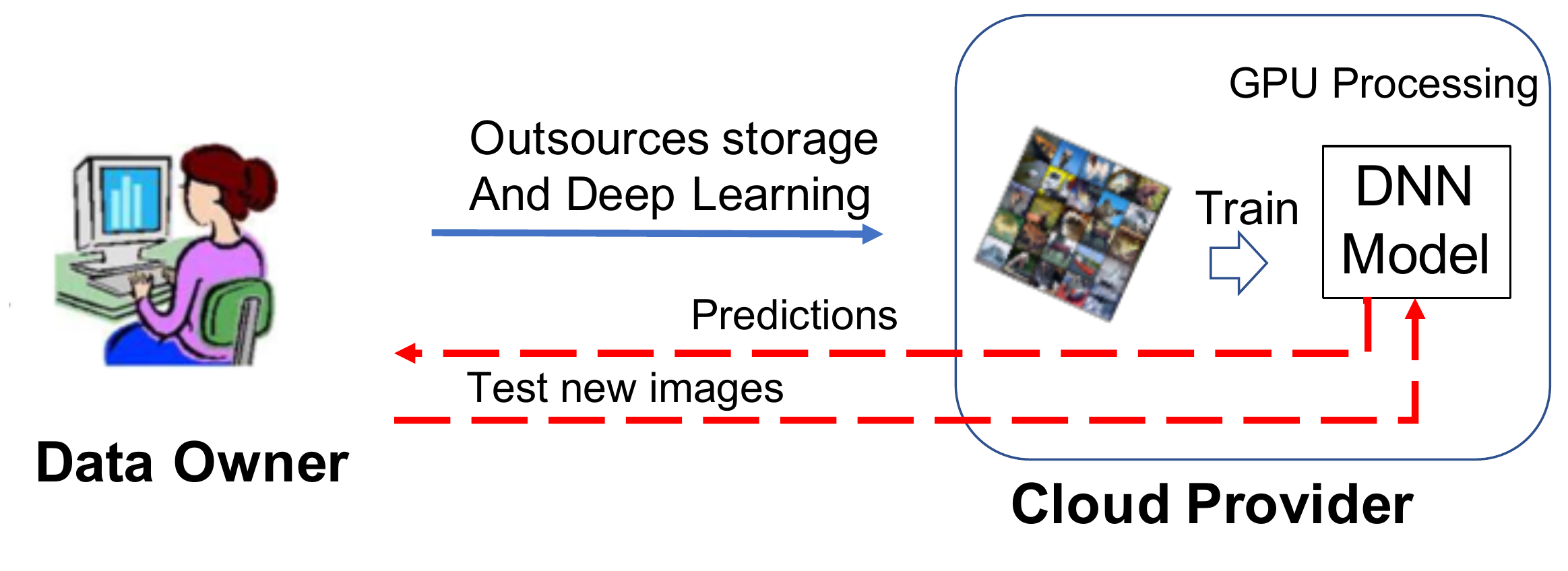}
\vspace{-0.3cm}
\caption{A data owner outsources her images to a cloud provider for storage and DNN modeling using GPU clusters.}
\label{fig:framework_generic}
\vspace{-0.55cm}
\end{figure}

Outsourcing DNN learning to the cloud provider introduces privacy risks when the training images may include sensitive images, as cloud providers are not fully trustable.  An adversarial party may investigate the uploaded images manually with human visual inspection to identify and analyze characteristics of sensitive objects. Furthermore, the adversary may exploit the learned models for its own use or launching model-based attacks as we will describe. Therefore, the data owner must either completely trust the cloud provider or deploy some data and model privacy protection mechanisms. 

\textbf{Problems with Existing Privacy-Preserving Approaches.} Due to the inherent cost of crypto approaches, they are not ideal candidates in hiding images from adversaries in outsourced deep learning as shown by ~\cite{mohassel17,xie14}. A few efficient data protection methods have been proposed recently to address the cost issue. Fan et al. ~\cite{fan18} rely on a differentially private (DP) mechanism to hide certain sensitive pixels (e.g., a person's face or a license plate number)  in the larger-context images. However, it depends on the predefined size of the sensitive objects in the images to determine the noise and $\epsilon$ privacy levels. This is difficult to apply in practice because of numerous possible variations of shapes, sizes, distances, and angles a sensitive object may be projected in different images. Often, many training data contains large size objects, such as human faces, that can be sensitive. Even for images containing many objects, as shown in Figure  ~\ref{fig:existing_app} (a), it is difficult to identify which sizes of an object are appropriate to protect. Furthermore, it is unclear whether blurring the targets is sufficient - the overall context may also reveal private information. 

In a different work, Li et al. ~\cite{li17} propose learning the first few layers of the target DNN model locally and outsourcing the intermediate representation to the cloud provider for further learning. Unfortunately, the intermediate representation when reconstructed at the cloud provider's sites is visually recognizable despite high Peak Signal-to-Noise Ratio (PSNR) (a metric used to quantify visual privacy in the paper) as seen in Figure ~\ref{fig:existing_app} (b). 

\begin{figure}
\vspace{-0.55cm}
  \centering
  \begin{tabular}{c @{\quad} c }
    \includegraphics[width=.30\linewidth]{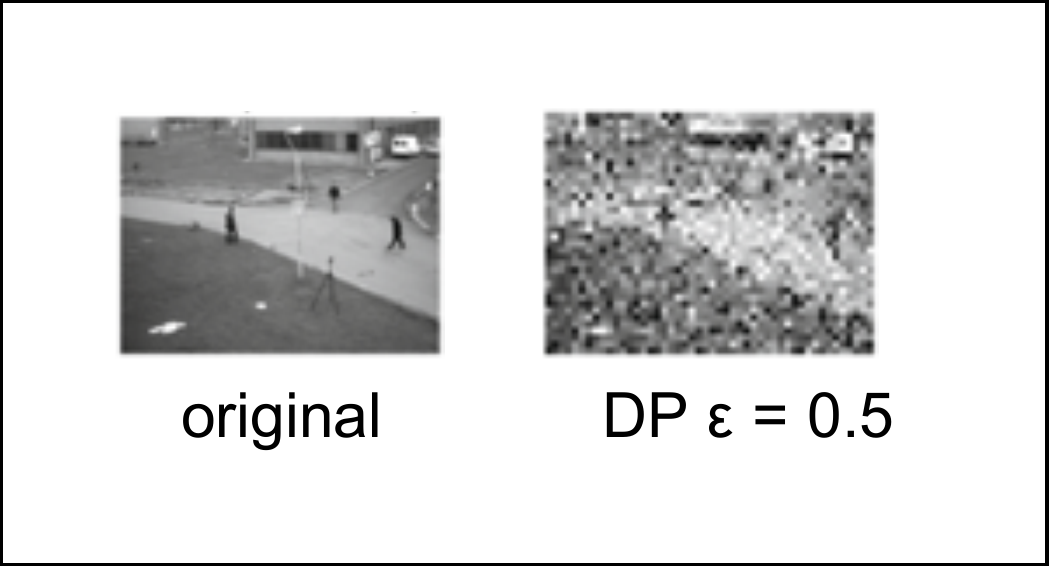} &
    \includegraphics[width=.30\linewidth]{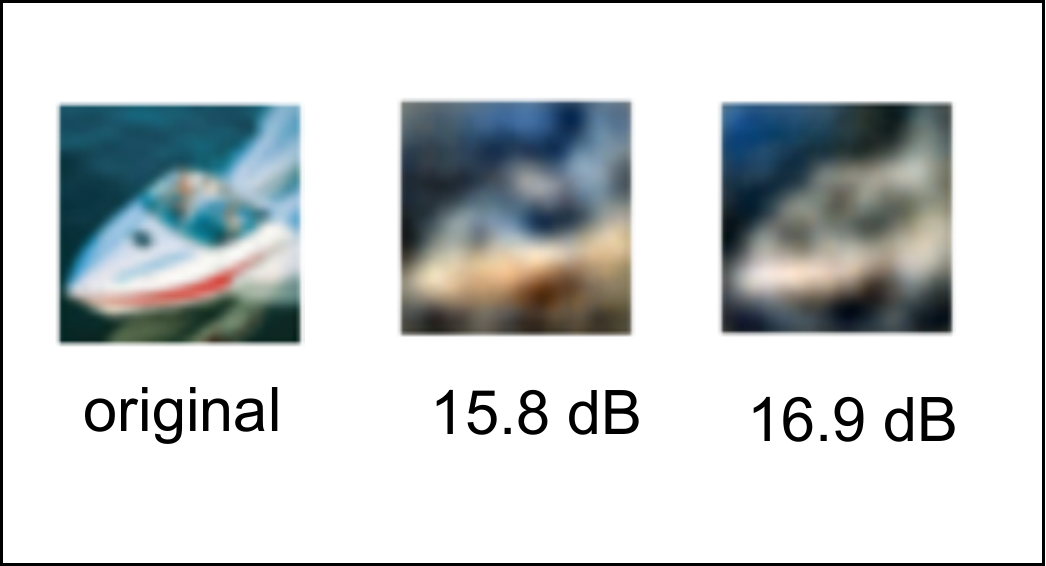} \\
    \small (a) & \small (b)
  \end{tabular}
  \vspace{-0.35cm}
  \caption{(a) DP-based pixellation of images ~\cite{fan18} reveals the global properties of the images making them distinguishable.  (b) The reconstructed images in PrivyNet ~\cite{li17} resemble the original images despite high PSNR values.}
  \label{fig:existing_app}
  \vspace{-0.45cm}
\end{figure}

In summary, there is no practical method for protecting the confidentiality of training data yet in outsourced deep learning. The cryptographic methods are still too expensive, and the two recent methods that avoid encryption \cite{li17,fan18} do not protect the images from visual inspection. 

\vspace{-0.3cm}
\subsection{Adversarial Model and Attacks}\label{sec:threat_model}
In the general framework we described earlier, we consider the adversaries can compromise the cloud infrastructure thus see the training data and the learned model if no protection mechanism is applied. In addition, we assume that adversaries can use the model as a black box - feeding the model with testing images and getting the prediction labels - even if the training data and the model is protected with some mechanism while the output labels are revealed. We will present two attacks specific to outsourced image-based deep learning: 1) Visual re-identification attack aimed at compromising the visual privacy of protected images, and 2) the class-membership attack aimed at exploiting a trained model in determining if a certain class of images was included in the training dataset. Then, we will elaborate on the security assumptions we will base our work on.

\vspace{-0.4cm}
\subsubsection{Visual Re-identification Attack.} 
We have shown that the existing work \cite{li17,fan18} are not effective at all in hiding sensitive images. An attacker can simply browse the blurred images to find out sensitive information.  Before we propose our own mechanism for image-based deep learning (Section \ref{sec:core}), we explore the basic requirement for protecting the privacy of image data: no adversary should be able to visually identify sensitive objects from the protected image data. We name this characteristic \emph{visual privacy}. We found that none of the existing metrics can precisely capture or define visual privacy. Use of pixel-level mean square error by Fan et al. ~\cite{fan18} and peak signal-noise ratio by Li et al. ~\cite{li17} do not capture the semantic understanding level that humans' visual perception can. 

We propose a DNN-based visual re-identification ``examiner" to serve as the agent of a human attacker. The recent advances in high-accuracy DNN models ~\cite{krizhevsky17} have shown that DNN models have exceeded human experts in image classification. Inspired by this, we propose to use a DNN models to impersonate the visual attackers to scan the protected images. We call such DNN models the ``DNN examiners''. Specifically, we can train a DNN examiner on the original training data and deploy it to distinguish the protected images. A high-accuracy result of the re-identification attack suggests the protection mechanism under scrutiny fails to maintain visual privacy. We define then \emph{visual privacy} as (1-Accuracy of DNN examiner in classifying the protected images).

\vspace{-0.4cm}
\subsubsection{Class-membership Attack.} 
Given a protection mechanism that thwarts the visual re-identification attack successfully, we need to eliminate any potential abuse of the exposed model trained on the protected images in exploring the training images. After carefully examining the outsourced deep learning scenarios, we identify a new class-membership attack that has not been defined or explored by the related work \cite{fan18,li17} yet. 

In the following, we design an attack that enables adversaries to learn whether a certain class of images was used as training examples by observing the target model's outputs. Figure \ref{fig:class_attack} shows the basic setting that adversaries can use (and thus explore) to try the model with any testing data in their hand and observe the model's outputs. The intuition is that a well-trained model should work nicely on records similar to the ones belonging to the training data classes whereas poorly on images from unrelated categories. For example, a face recognition DNN trained on 10 persons' face images must work much better on test images belonging to the same 10 individuals on test images belonging to others.  Let us denote the two categories of image classes as ``in-training" and ``out-training" classes respectively.

We can define this attack formally. Given a fully trained DNN model and known output labels $\{c_i| c_i \in C\}$, the adversary prepares a set of images, $\{t_i, i=1..m\}$, belonging to some class $c$ (a target class) that may or may not be one of the output labels. The adversary launches the class-membership attack to determine if $c\in C$, i.e. if the training dataset included images belonging to the target class $c$. The attacker's strategy is to characterize the output distribution $Pr(c'|\{t_i\})$ that will aid in inferring class memberships, where $c'$ represents the models prediction outputs. Test images belonging to an in-training class are consistently classified by the model to the same class with high probability for a reasonably good model; whereas test images belonging to an out-training class may see more uncertain outputs. Figure ~\ref{fig:class_attack} illustrates the idea of class-membership attack. A pointy histogram infers the target class (or a closely related class) was likely included in the training set whereas a flatter histogram suggests the target class was not likely included in the training set. Such distribution differences can be captured with entropy or Fano factor. Fano factor, similar to the variance-to-mean ratio (VMR), measures the index of dispersion and can be used in determining how two sets of observed occurrences are clustered or dispersed. Correspondingly, in-training examples of the same class will show smaller entropy or higher Fano factor than out-training examples. In experiments, we have shown that unprotected models are extremely vulnerable to class-membershup attack. 

\vspace{-0.5cm}
\begin{figure}[h]
\centering
\includegraphics[width= 0.65\linewidth]{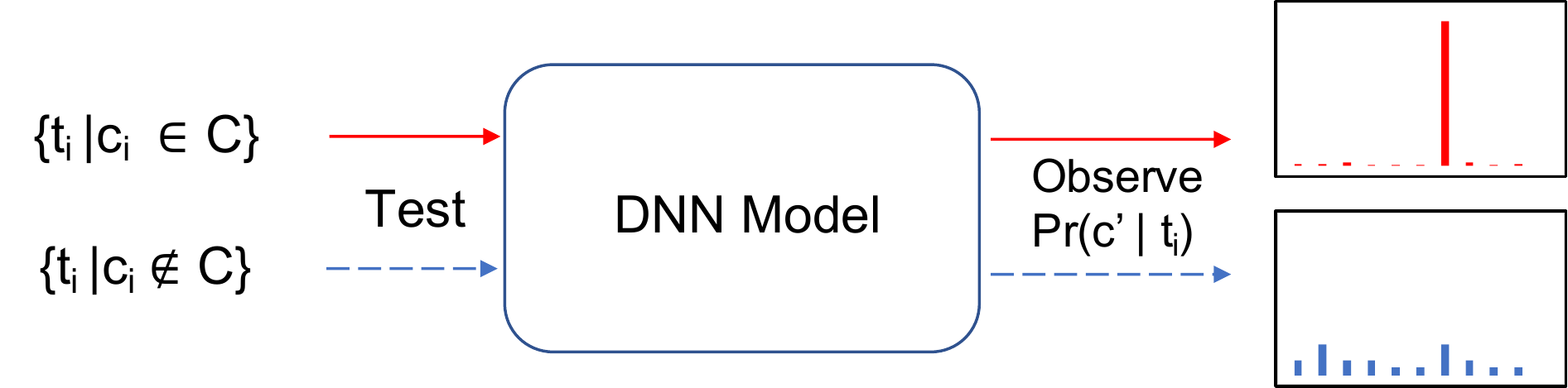}
\vspace{-0.15cm}
\caption{Class-membership Attack. Given a DNN model, an attacker analyzes the prediction outcomes of a series of testing images to determine if a certain class of images were included in the training set.}
\vspace{-0.4cm}
\label{fig:class_attack}
\end{figure}
\vspace{-0.3cm}

Expanding on the class-membership attack, an attacker may determine if a certain dataset was likely the training data, partially or fully. 

Any deep learning framework that exposes the model for sharing are subject to this attack. For example, the differentially private models trained with the techniques of Abadi et al. ~\cite{abadi16} and Shokri et al. ~\cite{reza15} are subject to the class-membership attack unless the model quality is low enough (due to the privacy setting), in which case the output distribution of in-training examples cannot be statistically distinguished from that of out-training examples. This, however, would fall short of the goal of developing useful models. 

\vspace{-0.4cm}
\subsubsection{Security Assumptions.}
Here, we will make some relevant security assumptions we will base our disguising mechanisms for image-based deep learning on: 1) We consider ciphertext-only attacks, i.e., any disguised image and its original image pair is unknown to the adversary; 2) The adversary possesses no prior knowledge of the images being outsourced, but they can use any images to explore the trained model; 3) All infrastructures and communication channels must be secure. We consider an honest-but-curious adversary, who may be interested in the contents and categorization (classes/labels) of the training images. The adversary might also want to misuse the models to distinguish or identify the domain of the training data with class-membership attacks.

\vspace{-0.35cm}
\section{Image Disguising for Deep Learning}\label{sec:core}
Our goal is to design a method to protect outsourced image-based deep learning from the two types of attacks described earlier. The current work will only address the challenges with whole-image based classification tasks. 

Figure ~\ref{fig:framework} depicts the Disguised-Nets framework. A data owner disguises her private images before outsourcing them to the cloud for storage and DNN learning. She transforms all of her images using one secure transformation key, $K$, which is comprised of the transformation types and the involved parameters. It is computationally difficult to guess the security key $K$ with brute-force and the transformed images do not leave sufficient information for adversaries to guess $K$ or launch the visual re-identification attack. She uses the cloud resources to train the DNN models from the transformed images with acceptable model quality.

\vspace{-0.15cm}
\begin{figure} [h]
\centering
\vspace{-0.35cm}
\includegraphics[width= 0.48\linewidth]{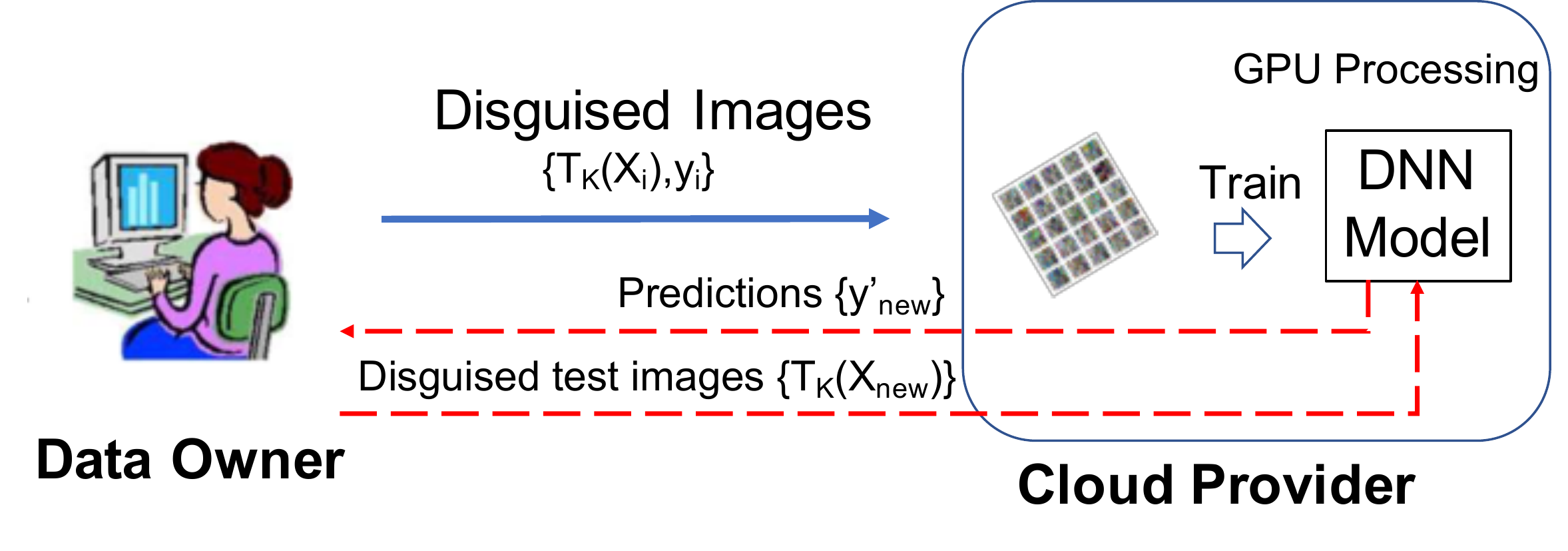}
\caption{Disguised-Nets: Image disguising framework for DNN learning.}
\label{fig:framework}
\vspace{-0.5cm}
\end{figure}

Specifically, assume the data owner owns a set of images for training, notated as pairs $\{(X_i, y_i)\}$, where $X_i$ is the image pixel matrix and $y_i$ the corresponding label. We formally define the disguising process as follows. Let the disguising mechanism be a transformation $T_K$, where $K$ is the secret key which depends on the selected perturbation techniques. By applying image disguising, the training data is transformed to $\{(T(X_i), y_i)\}$, which is used to train a DNN, denoted as a function $\mathbf{D}_T$, that takes disguised images $T(X)$ and outputs a predicted label $\hat{y}$. The models trained on images transformed with the image disguising mechanisms only work on transformed images and thus cannot be exploited with other image data as long as the transformation keys are secured. For any new data $X_{new}$, the model application is defined as $\mathbf{D}_T(T(X_{new}))$, the new data transformed with the same key $K$.

A remarkable characteristic of Disguised-Nets is that there is no need for one to alter or tailor the existing DNN architectures to make them compatible with the privacy mechanisms. One can simply import successful architectures such as ResNet and VGG to train the desired privacy-preserving models on the transformed data. In our opinion, this simplification is a great advantage over using traditional encryption or garbled circuit schemes which requires transforming the target DNN algorithms to their privacy-preserving versions, often a complex task, which results in expensive and impractical solutions.

The success of this approach depends on the transformation $T_K$ that preserves certain properties of the transformed data allowing DNN to learn the classification task. We consider a suite of image disguising mechanisms that can be combined with one another to achieve the desired level of privacy and utility. Candidate mechanisms must hide the \emph{visually identifiable} features of the images, i.e., attain good visual privacy and provide a sufficiently large key space to be resilient to ciphertext-only attacks. As a result, these mechanisms inevitably affect the quality of the learned DNNs. Therefore, finding the settings that provide both high security and model quality is crucial. While we have not theoretically justified the utility preserving mechanisms of these transformations yet, the empirical evaluation shows surprisingly good modeling results. 

\vspace{-0.35cm}
\subsection{Image Encoding and Partitioning} 
An image $X_{l \times m}$ with $lm$ pixels may have three RGB channels or just a single grayscale channel. We encode grayscale images as matrices of size ${l\times m}$ whereas the color images as three channel matrices of size ${3\times l \times m}$. The matrices might be partitioned into smaller \emph{blocks} for block-wise transformations to improve the visual privacy. In classification modeling, the image labels $c_i$ are mapped to $0, 1,  \dots$ without revealing their mapping to the actual classes. 

\begin{figure}
  \centering
  \vspace{-0.35cm}
  \begin{tabular}{c @{\quad} c @{\quad} c}
    \includegraphics[width=.32\linewidth]{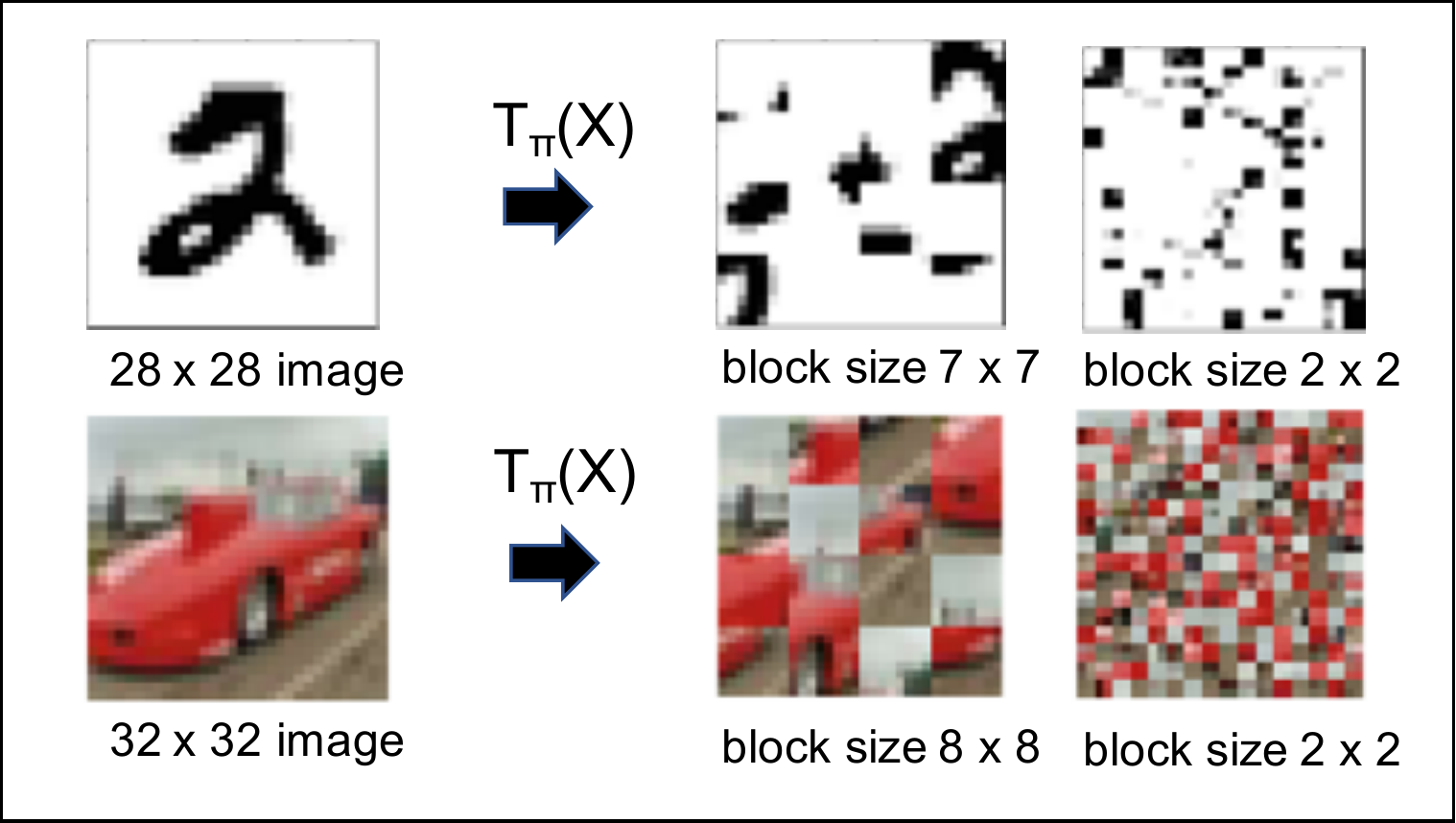} &
    \includegraphics[width=.32\linewidth]{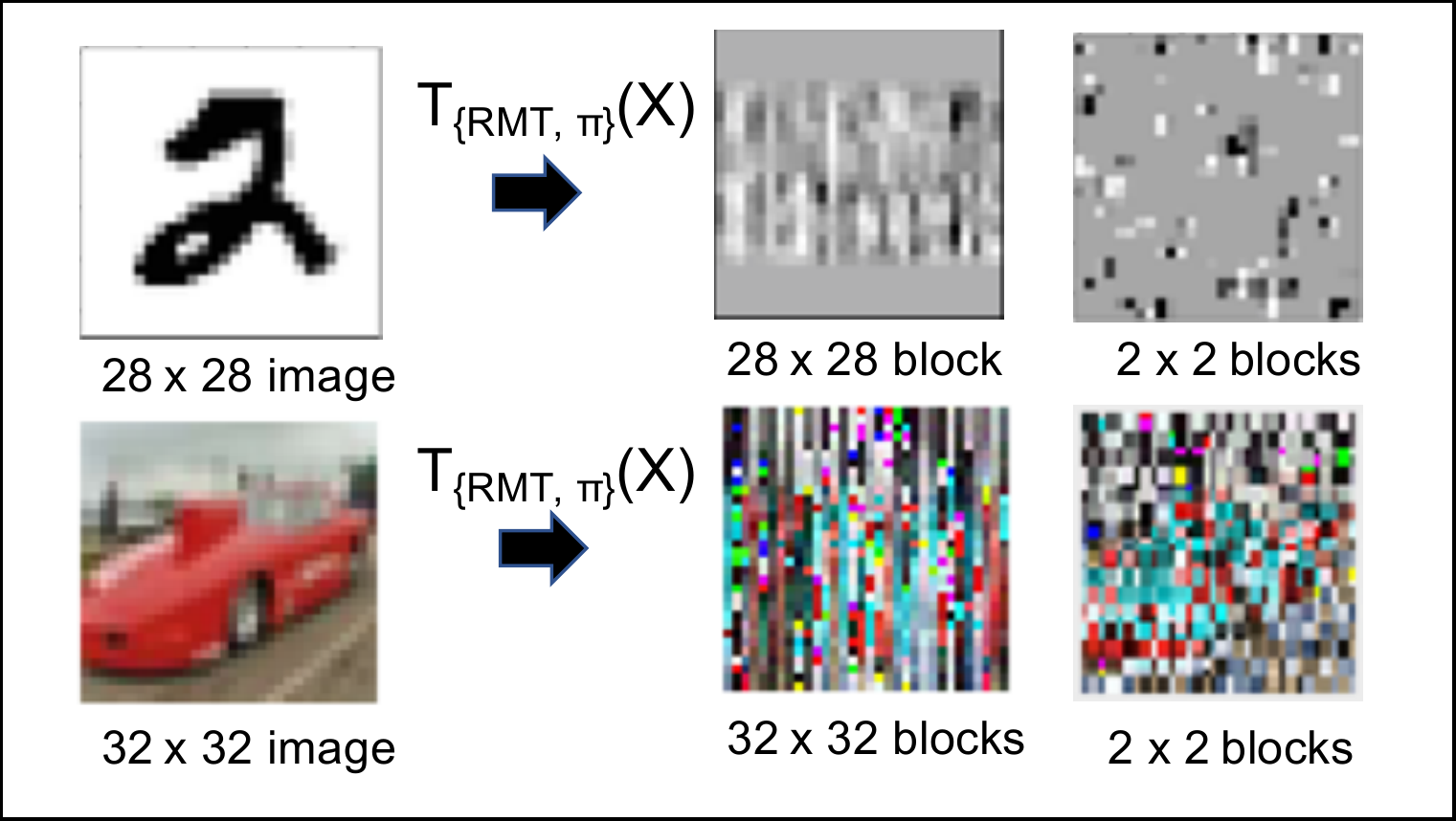} &
    \includegraphics[width=.32\linewidth]{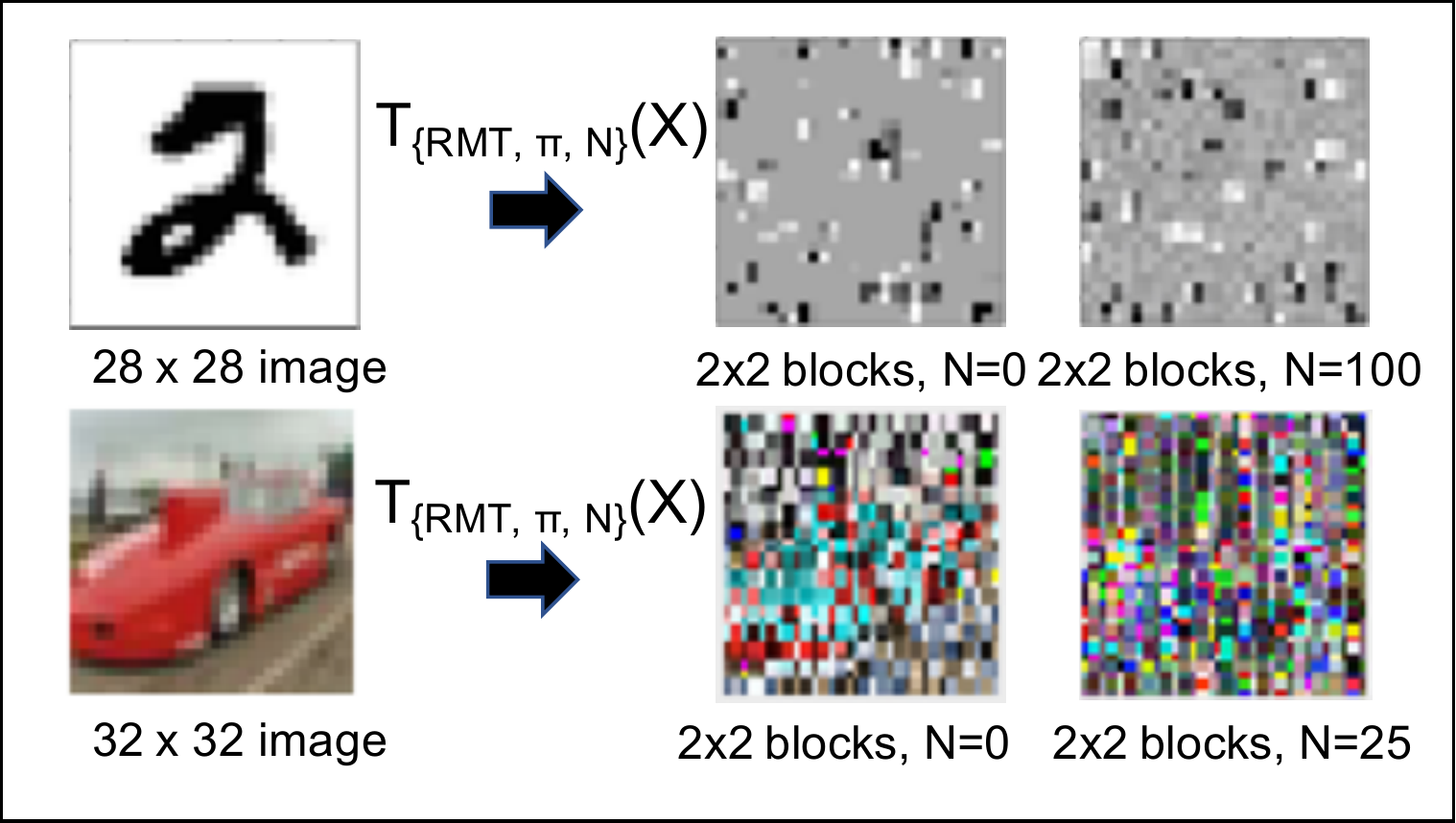} \\
    \small (a) Block-wise Permutation. & \small (b) Block-wise RMT. & \small (b) Block-wise RMT+Noise.
  \end{tabular}
  
  \caption{Different disguising mechanisms on MNIST and CIFAR-10 images. }
  \vspace{-0.55cm}
  \label{fig:disguises}
\end{figure}

\vspace{-0.45cm}
\subsection{Block-wise Permutation}\label{subsec:perm} 
The block-wise permutation simply partitions an image and re-arranges the image blocks randomly. An image $X_{l \times m}$ is partitioned into $t$ blocks of uniform size $ r\times s$. If we label the blocks sequentially as $v= <1,2,3,4, . . . t>$. A pseudorandom permutation of the image, $T_{\pi}(X)$, shuffles the blocks and reassemble the corresponding image accordingly. The permutation may break the global patterns of the images and achieve good visual privacy already. However, the block-wise characteristics such as boundaries, color, content shape, and texture of the original neighboring blocks may provide clues for adversaries to recover the original image - imagine the jigsaw puzzle! Figure ~\ref{fig:disguises} (a) shows an example. For large $t$, it might be difficult to apply such a jigsaw attack due to the vague similarity between block boundaries. In practice, however, the image size might be small, which leads to smaller settings of $t$ insufficient to protect from the jigsaw attack. Thus, we will need another layer of transformation to address the jigsaw attack. 

\vspace{-0.35cm}
\subsection{Randomized Multidimensional Transformations (RMT)}\label{subsubsec:tech_gdp}
\vspace{-0.1cm}
To address the block-wise jigsaw attack, we proceed with establishing a more resilient transformation mechanism that hides the visual attributes that aid in distinguishing the images from their transformed counterparts. For an image represented as a pixel matrix $X$, a general linear transformation can be defined as $G(X) = RX$, where $R_{ m \times m}$ is a random orthogonal matrix generated following the Haar distribution \cite{gallier00}, or a random projection matrix ~\cite{vempala05}. When an image is partitioned into $t$ blocks for random permutation, we will need a list of random matrices $\{R_i, i=1..t\}$, one for each image-block. The list of matrices $\{R_i\}$ acts as a secret key across the dataset and apply to the corresponding image-blocks. 

Such transformation is known to preserve (or approximately preserve by random projection) the Euclidean distance between columns of the matrix $X$. For non-image datasets, it is not possible to recover the original data from the perturbed data without certain prior knowledge of the data \cite{liu06tkde}, due to the large parameter space (we will discuss in Section \ref{sec:security_analysis}). However, for images, we found that without block-wise application, i.e., with one RMT for the entire image, RMT leaks information about sparse contents in images such as in MNIST dataset, as the zero-valued columns do not change after the transformation as shown in Figure \ref{fig:disguises} (b). Thus, it is often beneficial to combine block-wise permutation and RMT. Interestingly, when combining permutation and RMT, smaller block sizes preserve better model quality, while the block size may not matter much in protecting from visual re-identification attack, as shown by our experiments. 
 
To tackle the challenges presented by sparse images, we can also add noises into the transformation as $(X + \Delta)R$, where $\Delta$ is a random noise matrix, re-generated for each image (or image block) $X$, and drawn uniformly at random from $[0,N]$ where $N$ is the tunable noise level. Incorporation of additive noise before applying rotation perturbation converts the zero-pixel areas to noisy non-zero areas. Figure ~\ref{fig:disguises} (c) shows the effects of RMT on MNIST and CIFAR-10 datasets with and without the additional $\Delta$ noise. Note that the dense images, or even some sparse images, may have been well protected by block-wise permutation and RMT, and thus noise addition may not be needed as we will show in our experiments later.

\vspace{-0.45cm}
\section{Security Analysis}\label{sec:security_analysis}
In this section, we first present a proposition around the mathematical irreversibility of the proposed disguising mechanism under the assumption of ciphertext-only attacks. Next, we design the methods for the empirical assessment of the visual re-identification and class-membership attacks. 

\vspace{-0.3cm}
\subsection{Theoretical Analysis of Parameter Space Complexity}
With $X' = (X+\Delta)R$, we consider the ciphertext-only attacks scenario, i.e., the adversary has accesses to the ciphertexts, $X'$, without knowing any mapping pairs $X \rightarrow X'$. We present a proposition to show that enumerating the RMT matrices $R$ is computationally intractable. Assuming the $\Delta$ noise levels are relatively small and attackers can ignore it, a brute-force attack needs to enumerate all possible $R$ matrices to identify a valid one (e.g., by applying the visual re-identification attack). However, we show that the number of possible $R$ can be exponentially large for given parameters. 

\begin{prop} For values encoded in $h$-bit finite field, there are $O(2^{hm})$ candidate orthogonal matrices $R_{m\times m}$. 
\end{prop}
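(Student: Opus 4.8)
The plan is to prove the bound by counting a \emph{generating parameter set} rather than the matrix entries directly. First I would make explicit the encoding model stated in the proposition: every scalar lives in an $h$-bit finite field $\mathbb{F}_{2^h}$, so each field element takes at most $2^h$ distinct values. I would then regard each candidate matrix $R_{m\times m}$ not as $m^2$ unconstrained entries but as the deterministic output $R=\mathcal{G}(s)$ of the RMT key-generation procedure applied to a seed $s$, where $s$ is the tuple of $m$ field-valued parameters that the scheme uses to index the transformation inside the key $K$ (for instance, a single generating vector that deterministically defines the orthogonal factor). Because $\mathcal{G}$ is a well-defined function, the number of distinct matrices it can emit is at most the number of admissible seeds, which reduces the whole question to a seed count.

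The key step is then a one-line image-size bound: the seed $s\in(\mathbb{F}_{2^h})^m$ ranges over at most $(2^h)^m = 2^{hm}$ values, and the image of a function cannot exceed the cardinality of its domain, so the number of candidate matrices is at most $2^{hm}$, i.e. $O(2^{hm})$. I would also observe that degenerate seeds (those producing a non-invertible or non-orthonormal factor) only \emph{shrink} the admissible set, so discarding them preserves the bound, and that the $O(\cdot)$ constant comfortably absorbs any fixed multiplicity arising when several seeds map to the same $R$ (e.g. sign or normalization ambiguities). No case analysis on the entries themselves is needed once the seed parameterization is fixed.

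The main obstacle is reconciling this linear-in-$m$ exponent with the full set of orthonormality constraints. If one instead counted directly by assembling $R$ column by column, each new column must be a unit vector orthogonal to its predecessors and is therefore drawn from an $(m-k)$-dimensional discretized sphere, yielding on the order of $\prod_{k} 2^{h(m-k)} = 2^{hm(m-1)/2}$ matrices — strictly larger than $2^{hm}$ once $m\ge 3$, and in fact the correct order of $|O(m,\mathbb{F}_{2^h})|$. The delicate point the proof must therefore pin down is that ``candidate orthogonal matrices'' is to be read as \emph{the matrices reachable by the disguising procedure from its $m$-parameter seed}, not the entire finite orthogonal group; establishing that the construction is indexed by exactly $m$ field elements, and bounding seed-to-matrix collisions by a constant, is the crux. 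Once that indexing is justified, the domain/image inequality closes the argument immediately, and the surviving count is still exponential in $m$, which is all the subsequent intractability discussion requires.
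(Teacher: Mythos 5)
Your proposal diverges from the paper's proof and, more importantly, rests on a premise the scheme does not supply. The paper's proof is a direct group-theoretic count: with $h$-bit encoding there are $p=2^h$ distinct values, and it invokes the cited result that there are $O(p^m)$ orthogonal matrices in $\mathbb{Z}_p^{m\times m}$, hence $O(2^{hm})$. It counts the \emph{entire} finite orthogonal group; there is no seed, no generating map $\mathcal{G}$, and no $m$-parameter key-generation procedure anywhere in the construction. Section~\ref{subsubsec:tech_gdp} specifies that $R$ is a random orthogonal matrix drawn following the Haar distribution (or a random projection matrix), and standard Haar sampling is parameterized by on the order of $m^2$ random entries (or $m(m-1)/2$ rotation angles), not by $m$ field elements. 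So the crux of your argument --- that the candidate set is the image of a map from $(\mathbb{F}_{2^h})^m$ --- is precisely the step you concede must be ``pinned down,'' and nothing in the paper pins it down. As written, your domain/image inequality bounds the image of a map you invented, not the adversary's search space; worse, your reading would \emph{shrink} the claimed key space, which cuts against the proposition's purpose, since despite the $O(\cdot)$ notation it functions as a lower-bound-flavored intractability claim feeding the brute-force argument that follows it.

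That said, your side computation is the genuinely valuable part of the proposal: assembling $R$ column by column gives on the order of $2^{hm(m-1)/2}$ matrices, which is indeed the correct asymptotic order of the finite orthogonal group (e.g., $|O_{2n+1}(q)| = 2q^{n^2}\prod_{i=1}^{n}(q^{2i}-1)$, of order $q^{m(m-1)/2}$ for $m=2n+1$). This exceeds the $O(p^m)$ figure the paper quotes once $m\ge 3$, so the paper's count is an undercount of the full group rather than a tight bound; since the security conclusion only needs the parameter space to be exponentially large, the larger true count makes intractability hold a fortiori. The defensible repairs are either to state the proposition as a lower bound $\Omega(2^{hm})$ --- which your column-by-column count already certifies with no seed assumption at all --- or to restate the count as $\Theta(2^{hm(m-1)/2})$ via the standard order formula. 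What does not work is importing an $m$-element seed to force the $2^{hm}$ figure to be the size of the candidate set.
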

\begin{proof} With $h$-bit encoding, there are $p=2^h$ distinct values. The theory of orthogonal matrix group on finite fields states that there are $O(p^m)$ orthogonal matrices in $\mathbb{Z}_p^{m \times m}$ for a $p$-element field ~\cite{curtis84}. Hence, there are $O(2^{hm})$ orthogonal matrices.
\end{proof}
With the current setting we used in the experiments, e.g., $h = 32$ and $m=4$, enumerating the orthogonal matrices $R$ is computationally intractable. For random projection, as each element of the matrix is randomly drawn from a normal distribution, the number of possible matrices is even larger. 

We can extend the analysis to the case that block-wise RMT and permutation are combined. For simplicity, let each dimension of the matrix is partitioned into $r$ shares. Thus, there are $r^2$ matrix blocks, each of which has the size of $(m/r) \times (m/r)$. For one permutation of the blocks, there are $O(2^{hmr})$ combinations of matrices. Correspondingly, there are $O((r^2)!2^{hmr})$ combinations for all possible permutations. Therefore, block-wise partitioning further increases the complexity of the parameter space. 

In summary, the parameter space of the image disguising methods is large enough to address the brute-force ciphertext-only attack.

\vspace{-0.3cm}
\subsection{Empirical Assessments of Visual Re-identification and Class-membership Attacks}\label{subsec:empirical}
We design a set of tools for empirically assessing the effect of both the visual re-identification and class-membership attacks. For the visual re-identification attacks, we train a DNN examiner with the original training data and apply it to distinguish the disguised images from one another. We measure the overall accuracy of these applications. The result (1 - accuracy of DNN examiner) is used for empirical visual privacy.

For the class-membership attacks, attackers will apply the DNN models learned from the disguised data to original image data, possibly from other domains or the same domain. Note, without the secret keys, the adversary cannot transform the attack images. We mimic this attack by applying the disguised DNN to both in-training and out-training classes of images in their original form. We measure the class-wise Fano factors for both the in-training and out-training classes of images and see if the classes of the images are statistically distinguishable. Specifically, for a series of images $\{X_i, i=1..n\}$, the output label distribution over the classes forms a histogram. Let $n_{c_j}$ be the number of labels for $c_j$, $j=1..k$. The estimated mean of the distribution is $\mu = (\sum_{j=1}^k n_{c_j})/k$ and the estimated variance $\sigma^2$ is $(\sum_{j=1}^k(n_{c_j}-\mu)^2)/k$. The Fano factor is $\sigma^2/\mu$. We expect all in-training classes to have significantly higher Fano factor values than the out-training classes. In Section ~\ref{sec:experim}, we will empirically analyze the resilience of our image disguising mechanisms against both attacks. 

\vspace{-0.35cm}
\section{Experiments}\label{sec:experim}
\vspace{-0.15cm}

This experimental evaluation\footnote{Source code and scripts uploaded to https://github.com/datascale/DisguisedNets} has two specific goals. First, we will show that how effective the image disguising methods in preserving the model quality, with different parameter settings. Second, we show whether our methods are also resilient to the two attacks, with the empirical attack evaluation methods described in Section \ref{sec:security_analysis}.

\textbf{Datasets.} We test our disguising mechanisms with three prevalent DNN benchmarking datasets: MNIST and CIFAR-10 and a subset of face recognition dataset LFW faces. We use an additional dataset known as FASHION dataset in the class-membership attack evaluations on models trained with MNIST. MNIST (handwritten digits) and FASHION (fashion items) image-sets both consists of 60,000 training and 10,000 testing gray-scale $28 \times 28$ pixel-images with 10 classes. CIFAR-10 image-set consists of 50,000 training and 10,000 testing color-images of size $32 \times 32$ belonging to 10 classes. The subset of LFW faces dataset we use consists of a relatively smaller number (1,400 training and 150 testing) of color-images belonging to 12 classes. As LFW has images of a size larger than the images in CIFAR-10, we resize down the LFW images to 32x32 for assessing the class-membership attack.

\vspace{-0.4cm}
\subsection{Model Quality and Setup Cost}

Table ~\ref{tab:parameter_setting} details the mechanisms, block size, and additive noise level used for the datasets. We used a simple DNN architecture for MNIST implemented with TensorFlow, and the more powerful ResNet ~\cite{kaiming15} architecture implemented on PyTorch for CIFAR-10 and LFW datasets. For MNIST, we set the learning rate to 0.001 and train the network for 1,000 iterations. For CIFAR-10 and LFW, we adaptably adjust the learning rate from 0.1 to 0.001 as the models are trained for 350 iterations. We use an 8-GPU cluster to train the models and each experiment was carried out 5 times to capture the variances of results. Table ~\ref{tab:results} shows that the models trained on disguised images perform closely to the optimum models trained on the undisguised images.

\begin{table}[h]
  	\centering
  	\scriptsize
  	\caption{Parameter settings and CNN Architectures.} \label{tab:parameter_setting}
	  \vspace{-.19cm}
  		\begin{tabular}{|c|c|c|c|c|}
  			\hline
  			Datasets &$Mechanisms$ & Block size & Noise Level& Architecture\\
  			\hline
			MNIST & block-wise RMT + Permutation &\{$7 \times 7\}$ &100 & Simple\\
			CIFAR-10 & block-wise RMT & \{$2 \times 2\}$ & 25 &ResNet\\
			LFW & block-wise RMT& \{$2 \times 2\}$&50&ResNet\\ 
			\hline
  		\end{tabular}
 \end{table} 

\begin{table}[h]
  	\centering
  	\scriptsize
  	\caption{Results of applying image disguising mechanisms.} \label{tab:results}
	  \vspace{-.19cm}
  		\begin{tabular}{|c|c|c|}
		\hline
		& \multicolumn{2}{c|}{Model Accuracy}\\
		Datasets &With Disguise & Without Disguise\\
  			\hline
			MNIST &96.6 +/- 0.4\% & 96.7 +/-0.2\% \\
			CIFAR-10 & 89.3\%+/-0.1\% & 93.4 +/-0.2\% \\
			LFW &90.6 +/- 1.3\%&94.3 +/-2.0\\
			\hline
  		\end{tabular}
		\vspace{-0.3cm}
  \end{table} 

The cost of the image disguising transformations are generally very cheap, (per image cost is less than 10ms) and can be comfortably done by any PC.

For the experiments in following subsections, we keep all the parameters in Table ~\ref{tab:parameter_setting} constant and vary the parameter under discussion unless noted otherwise. 

\vspace{-0.3cm}
\subsection{Effect of Parameter Settings on Model Quality}
\vspace{-0.15cm}
Our objective here is to understand the effect of different parameter settings on model quality. From Figure ~\ref{fig:proj_vs_orth} (left), it is clear that the DNN models were significantly more effective when applying RMT with the orthogonal matrices as compared to applying RMT with projection matrices for MNIST.  However, we observe the variation results in comparable model quality for CIFAR-10 and LFW. On the other hand, permutation of RMT blocks, which intuitively reduces the model quality, deteriorates the model quality negligibly for MNIST, moderately for LFW, and a bit alarmingly for CIFAR-10 as seen in Figure ~\ref{fig:proj_vs_orth} (right). We prefer orthogonal matrices for the optimum setting.

Figure ~\ref{fig:model_block_sizes} (left) shows that the model quality for LFW and CIFAR-10 datasets increases with smaller block sizes i.e. with the increasing number of blocks. However,  we do not observe much effect on MNIST. Intuitively, larger noise levels should degrade model quality. Figure ~\ref{fig:model_block_sizes} (right) shows the expected effect is prominent for the CIFAR-10 and LFW with significant degradation of model quality with increasing noise levels. Again, the effect is absent on the MNIST dataset, the model quality remaining steady with an increase in noise level.

\vspace{-0.1cm}
\begin{figure}[h]
\vspace{-0.3cm}
\centering
	\begin{tikzpicture}[scale=0.48]
	\begin{axis}[
 	ybar=-2pt, bar shift = 7pt,enlarge x limits=0.2,
 	bar width=12pt,
	log basis y={10},
 	ymin=50, ymax=100,
 	axis x line*=bottom, ylabel near ticks, yticklabel pos=left, yticklabel style={font=\Large},
	 y axis line style={opacity=50},
 	legend columns=2,legend style={at={(0.5,0.99)},draw=none,anchor=south,
 		font=\Large},
 	ylabel={Avg. Model Quality},
 	ylabel style={font=\Large},
 	symbolic x coords={MNIST Orth, MNIST Proj, blank, CIFAR Orth, CIFAR Proj,blank2, LFW Orth, LFW Proj},
	xticklabels={, MNIST, ,CIFAR-10,LFW},
	yticklabel=\pgfmathprintnumber\tick\%,yticklabel={\pgfmathparse{\tick}\pgfmathprintnumber{\pgfmathresult}\%}]
 	  \addplot[error bars/.cd, y dir=both,y explicit][draw=black, fill=blue,fill opacity=0.85,postaction={pattern=north east lines}] coordinates {
	  	(MNIST Orth,97.97)+-(0.3,0.3)};
	  \addplot[error bars/.cd, y dir=both,y explicit][draw=black, fill=yellow,fill opacity=0.85,postaction={pattern=grid}] coordinates {
		(MNIST Proj,59.39)+-(25.0,25.0)};
	 \addplot[error bars/.cd, y dir=both,y explicit][draw=black, fill=yellow,fill opacity=0.85,postaction={pattern=grid}] coordinates {
		(blank,0)};
	\addplot[error bars/.cd, y dir=both,y explicit][draw=black, fill=blue,fill opacity=0.85,postaction={pattern=north east lines}] coordinates {
	  	(CIFAR Orth,89.12)+-(0.02,0.02)};
	  \addplot[error bars/.cd, y dir=both,y explicit][draw=black, fill=yellow,fill opacity=0.85,postaction={pattern=grid}] coordinates {
		(CIFAR Proj,87.73)+-(0.56,0.56)};
	 \addplot[error bars/.cd, y dir=both,y explicit][draw=black, fill=yellow,fill opacity=0.85,postaction={pattern=grid}] coordinates {
		(blank2,0)};
		\addplot[error bars/.cd, y dir=both,y explicit][draw=black, fill=blue,fill opacity=0.85,postaction={pattern=north east lines}] coordinates {
	  	(LFW Orth,89.80)+-(1.30,1.30)};
	  \addplot[error bars/.cd, y dir=both,y explicit][draw=black, fill=yellow,fill opacity=0.85,postaction={pattern=grid}] coordinates {
		(LFW Proj,91.80)+-(0.01,0.01)};
	\legend{Orthogonal, Projection} 
 	\end{axis}
 	\end{tikzpicture}
	\begin{tikzpicture}[scale=0.48]
	\begin{axis}[
 	ybar=-7pt, bar shift = 6pt,enlarge x limits=0.2,
 	bar width=12pt,
 	ymin=50, ymax=100,
 	axis x line*=bottom, ylabel near ticks, yticklabel pos=left, yticklabel style={font=\Large},
	 y axis line style={opacity=50},
 	legend columns=2,legend style={at={(0.5,0.99)},draw=none,anchor=south,
 		font=\Large},
 	ylabel={Avg. Model Quality},
 	ylabel style={font=\Large},
 	symbolic x coords={MNIST Perm, MNIST NoPerm, blank, CIFAR Perm, CIFAR NoPerm, blank2,LFW Perm, LFW NoPerm},
	xticklabels={,MNIST, ,CIFAR-10, LFW},
	yticklabel=\pgfmathprintnumber\tick\%,yticklabel={\pgfmathparse{\tick}\pgfmathprintnumber{\pgfmathresult}\%}]
 	  \addplot[error bars/.cd, y dir=both,y explicit][draw=black, fill=blue,fill opacity=0.85,postaction={pattern=north east lines}] coordinates {
	  	(MNIST Perm,97.97)+-(0.35,0.35)};
	  \addplot[error bars/.cd, y dir=both,y explicit][draw=black, fill=yellow,fill opacity=0.85,postaction={pattern=grid}] coordinates {
		(MNIST NoPerm,97.08)+-(0.92,0.92)};
	 \addplot[error bars/.cd, y dir=both,y explicit][draw=black, fill=yellow,fill opacity=0.85,postaction={pattern=grid}] coordinates {
		(blank,0)};
	\addplot[error bars/.cd, y dir=both,y explicit][draw=black, fill=blue,fill opacity=0.85,postaction={pattern=north east lines}] coordinates {
	  	(CIFAR Perm,60.07)+-(0.7,0.7)};
	  \addplot[error bars/.cd, y dir=both,y explicit][draw=black, fill=yellow,fill opacity=0.85,postaction={pattern=grid}] coordinates {
		(CIFAR NoPerm,89.12)+-(0.02,0.02)};
			 \addplot[error bars/.cd, y dir=both,y explicit][draw=black, fill=yellow,fill opacity=0.85,postaction={pattern=grid}] coordinates {
		(blank2,0)};
		\addplot[error bars/.cd, y dir=both,y explicit][draw=black, fill=blue,fill opacity=0.85,postaction={pattern=north east lines}] coordinates {
	  	(LFW Perm,73.0)+-(0.71,0.71)};
	  \addplot[error bars/.cd, y dir=both,y explicit][draw=black, fill=yellow,fill opacity=0.85,postaction={pattern=grid}] coordinates {
		(LFW NoPerm,90.60)+-(1.34,1.34)};
	\legend{Permutation,No permutation} 
 	\end{axis}
 	\end{tikzpicture}
	\vspace{-0.2cm}
 	\caption{Effect on Model Quality: Orthogonal vs. Projection (left). Permutation (right)}
	\vspace{-0.3cm}	
 	\label{fig:proj_vs_orth}
\end{figure}
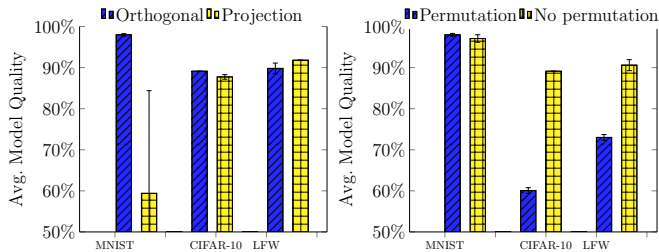
\vspace{-0.4cm}
\begin{figure}[h]
\vspace{-0.1cm}
\centering
\begin{tikzpicture}[scale=0.48]
 	\pgfplotsset{every axis legend/.append style={font=\small},every node near coord/.append style={font=\Large}}
 	\begin{axis}
 	[ymin=50,ymax=100.0,
	xlabel={Block Counts},xlabel style = {font = \Large},
 	point meta ={y*100},
 	ylabel={Avg. Model Quality}, ylabel style = {font=\Large},
	symbolic x coords={1,4,16,49,64,196,256},
	yticklabel=\pgfmathprintnumber\tick\%,yticklabel={\pgfmathparse{\tick}\pgfmathprintnumber{\pgfmathresult}\%},
	legend columns=2,legend style={at={(0.5,1.0)},draw=none,anchor=south,
 		font=\Large},
	y tick label style = {font = \Large},
	x tick label style = {font = \Large}
	]
	\addplot+[mark=*,error bars/.cd,
 	x dir=both
 	,y dir=both,y explicit]
 	table[x=block_count,y=accuracy, y error = std,col sep=comma]
 	{./data/block_variation_mnist.csv};
 	\addlegendentry{MNIST}	
	\addplot+[mark=square,color=red,draw opacity=0.6,error bars/.cd,
 	x dir=both,
 	y dir=both,y explicit]
 	table[x=block_count,y=accuracy,y error=std, col sep=comma]
 	{./data/block_variation_cifar.csv};
 	\addlegendentry{CIFAR}
	\addplot+[mark=square,color=black,draw opacity=0.6,error bars/.cd,
 	x dir=both,
 	y dir=both,y explicit]
 	table[x=block_count,y=accuracy,y error=std, col sep=comma]
 	{./data/block_variation_faces.csv};
	\addlegendentry{LFW}
 	\end{axis}
 \end{tikzpicture} 
\begin{tikzpicture}[scale=0.48]
 	\pgfplotsset{every axis legend/.append style={font=\small},every node near coord/.append style={font=\Large}}
 	\begin{axis}
 	[ymin=50,ymax=100.0,
	xlabel={Noise Levels},xlabel style = {font = \Large},
 	point meta ={y*100},
 	ylabel={Avg. Model Quality}, ylabel style = {font=\Large},
	xtick=data,
	yticklabel=\pgfmathprintnumber\tick\%,yticklabel={\pgfmathparse{\tick}\pgfmathprintnumber{\pgfmathresult}\%},
	legend columns=2,legend style={at={(0.5,1.0)},draw=none,anchor=south,
 		font=\Large},
	y tick label style = {font = \Large},
	x tick label style = {font = \Large}
	]	
	\addplot+[mark=*,error bars/.cd,
 	x dir=both
 	,y dir=both,y explicit]
 	table[x=noise_lvl,y=accuracy, y error = std,col sep=comma]
 	{./data/noise_variation_mnist.csv};
 	\addlegendentry{MNIST}	
	\addplot+[mark=square,color=red,draw opacity=0.6,error bars/.cd,
 	x dir=both,
 	y dir=both,y explicit]
 	table[x=noise_lvl,y=accuracy,y error=std, col sep=comma]
 	{./data/noise_variation_cifar.csv};
 	\addlegendentry{CIFAR-10}
	\addplot+[mark=square,color=black,draw opacity=0.6,error bars/.cd,
 	x dir=both,
 	y dir=both,y explicit]
 	table[x=noise_lvl,y=accuracy,y error=std, col sep=comma]
 	{./data/noise_variation_faces.csv};
 	\addlegendentry{LFW}
 	\end{axis}
 \end{tikzpicture}
	\vspace{-0.20cm}
 	\caption{Effect on Model Quality: Varying block counts (left). Varying noise levels (right)}
 	\label{fig:model_block_sizes}
	\vspace{-0.4cm}
\end{figure}
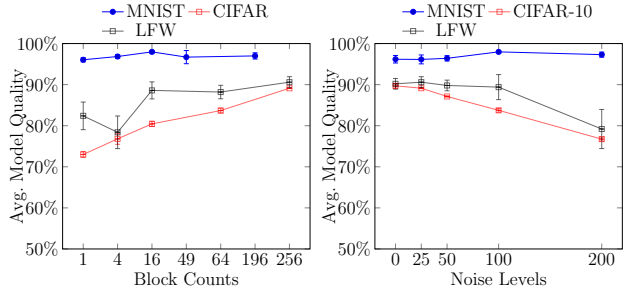

 \vspace{-0.35cm}
 \subsection{Attack Evaluation}
 \vspace{-0.15cm}
\subsubsection{Resilience to Visual Re-identification Attacks.} We observe that both the orthogonal and projection matrix based RMT successfully thwart the visual re-identification attacks (i.e. preserve high visual privacy) for all datasets as seen in Figure ~\ref{fig:perm} (left). Further permutation of the RMT blocks prominently increases the visual privacy for the sparse MNIST dataset whereas not so much for the denser CIFAR-10 and LFW datasets as seen in Figure ~\ref{fig:perm} (right). 

Figure ~\ref{fig:noise} (left) shows that the variation of the block sizes or the block counts does not affect visual privacy much for the denser datasets of CIFAR-10 and LFW. However, we observe a detectable drop in visual privacy for MNIST when using a single RMT for the entire image. On the other hand, the introduction of the additive noise does not seem to reduce the effectiveness of the DNN-examiners in compromising visual privacy as seen in Figure ~\ref{fig:noise} (right). As we observe high visual privacy across the parameter settings mostly,  we can be flexible in choosing the parameters that maximize the model quality. As Figure \ref{fig:model_block_sizes} shows, in general, we can choose smaller block sizes (larger block counts) and smaller noise levels to achieve better model quality. 

\begin{figure}[h]
\vspace{-0.4cm}
\centering
	\begin{tikzpicture}[scale=0.48]
	\begin{axis}[
 	ybar=-2pt, bar shift = 7pt,enlarge x limits=0.2,
 	bar width=12pt,
	log basis y={10},
 	ymin=50, ymax=100,
 	axis x line*=bottom, ylabel near ticks, yticklabel pos=left, yticklabel style={font=\Large},
	 y axis line style={opacity=50},
 	legend columns=2,legend style={at={(0.5,0.99)},draw=none,anchor=south,
 		font=\Large},
 	ylabel={Avg. Visual Privacy},
 	ylabel style={font=\Large},
 	symbolic x coords={MNIST Orth, MNIST Proj, blank, CIFAR Orth, CIFAR Proj, blank2,LFW Orth, LFW Proj},
	xticklabels={,MNIST, ,CIFAR-10, LFW},
	yticklabel=\pgfmathprintnumber\tick\%,yticklabel={\pgfmathparse{\tick}\pgfmathprintnumber{\pgfmathresult}\%}]
 	  \addplot[error bars/.cd, y dir=both,y explicit][draw=black, fill=blue,fill opacity=0.85,postaction={pattern=north east lines}] coordinates {
	  	(MNIST Orth,90.27)+-(3.33,3.33)};
	  \addplot[error bars/.cd, y dir=both,y explicit][draw=black, fill=yellow,fill opacity=0.85,postaction={pattern=grid}] coordinates {
		(MNIST Proj,91.03)+-(2.69,2.69)};
	 \addplot[error bars/.cd, y dir=both,y explicit][draw=black, fill=yellow,fill opacity=0.85,postaction={pattern=grid}] coordinates {
		(blank,0)};
	\addplot[error bars/.cd, y dir=both,y explicit][draw=black, fill=blue,fill opacity=0.85,postaction={pattern=north east lines}] coordinates {
	  	(CIFAR Orth,89.44)+-(0,0)};
	  \addplot[error bars/.cd, y dir=both,y explicit][draw=black, fill=yellow,fill opacity=0.85,postaction={pattern=grid}] coordinates {
		(CIFAR Proj,87.71)+-(0.57,0.57)};
	\addplot[error bars/.cd, y dir=both,y explicit][draw=black, fill=yellow,fill opacity=0.85,postaction={pattern=grid}] coordinates {
		(blank2,0)};
	\addplot[error bars/.cd, y dir=both,y explicit][draw=black, fill=blue,fill opacity=0.85,postaction={pattern=north east lines}] coordinates {
	  	(LFW Orth,98.0)+-(1.30,1.30)};
	  \addplot[error bars/.cd, y dir=both,y explicit][draw=black, fill=yellow,fill opacity=0.85,postaction={pattern=grid}] coordinates {
		(LFW Proj,86.0)+-(0.74,0.74)};
	\legend{Orthogonal, Projection} 
 	\end{axis}
 	\end{tikzpicture}
	\begin{tikzpicture}[scale=0.48]
	\begin{axis}[
 	ybar=-2pt, bar shift = 7pt,enlarge x limits=0.2,
 	bar width=12pt,
	log basis y={10},
 	ymin=50, ymax=100,
 	axis x line*=bottom, ylabel near ticks, yticklabel pos=left, yticklabel style={font=\Large},
	 y axis line style={opacity=50},
 	legend columns=2,legend style={at={(0.5,0.99)},draw=none,anchor=south,
 		font=\Large},
 	ylabel={Avg. Visual Privacy},
 	ylabel style={font=\Large},
 	symbolic x coords={MNIST Perm, MNIST NoPerm, blank, CIFAR Perm, CIFAR NoPerm, blank2,LFW Perm, LFW NoPerm},
	xticklabels={,MNIST, ,CIFAR-10, LFW},
	yticklabel=\pgfmathprintnumber\tick\%,yticklabel={\pgfmathparse{\tick}\pgfmathprintnumber{\pgfmathresult}\%}]
 	  \addplot[error bars/.cd, y dir=both,y explicit][draw=black, fill=blue,fill opacity=0.85,postaction={pattern=north east lines}] coordinates {
	  	(MNIST Perm,90.27)+-(3.33,3.33)};
	  \addplot[error bars/.cd, y dir=both,y explicit][draw=black, fill=yellow,fill opacity=0.85,postaction={pattern=grid}] coordinates {
		(MNIST NoPerm,77.43)+-(5.20,5.20)};
	 \addplot[error bars/.cd, y dir=both,y explicit][draw=black, fill=yellow,fill opacity=0.85,postaction={pattern=grid}] coordinates {
		(blank,0)};
	\addplot[error bars/.cd, y dir=both,y explicit][draw=black, fill=blue,fill opacity=0.85,postaction={pattern=north east lines}] coordinates {
	  	(CIFAR Perm,89.63)+-(0.0,0.0)};
	  \addplot[error bars/.cd, y dir=both,y explicit][draw=black, fill=yellow,fill opacity=0.85,postaction={pattern=grid}] coordinates {
		(CIFAR NoPerm,89.64)+-(3.18,3.18)};
			 \addplot[error bars/.cd, y dir=both,y explicit][draw=black, fill=yellow,fill opacity=0.85,postaction={pattern=grid}] coordinates {
		(blank2,-5)};
	\addplot[error bars/.cd, y dir=both,y explicit][draw=black, fill=blue,fill opacity=0.85,postaction={pattern=north east lines}] coordinates {
	  	(LFW Perm,98.00)+-(1.31,1.31)};
	  \addplot[error bars/.cd, y dir=both,y explicit][draw=black, fill=yellow,fill opacity=0.85,postaction={pattern=grid}] coordinates {
		(LFW NoPerm,98.00)+-(0.61,0.61)};
	\legend{Permutation, No permutation} 
 	\end{axis}
 	\end{tikzpicture}
	\vspace{-0.2cm}
 	\caption{Effect on Visual Privacy: Orthogonal vs. Projection (left). Permutation (right)}
	\vspace{-0.4cm}	
 	\label{fig:perm}
\end{figure}
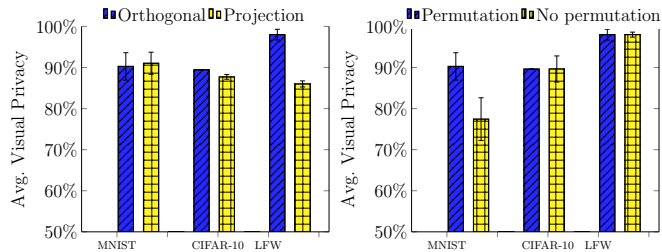
\vspace{-0.65cm}
\begin{figure}[h]
\vspace{-0.4cm}
\centering
\begin{tikzpicture}[scale=0.48]
 	\pgfplotsset{every axis legend/.append style={font=\small},every node near coord/.append style={font=\Large}}
 	\begin{axis}
 	[ymin=80.0,ymax=100.0,
	xlabel={Block Counts},xlabel style = {font = \Large},
 	point meta ={y*100},
 	ylabel={Avg. Visual Privacy}, ylabel style = {font=\Large},
	symbolic x coords={1,4,16,49,64,196,256},
	yticklabel=\pgfmathprintnumber\tick\%,yticklabel={\pgfmathparse{\tick}\pgfmathprintnumber{\pgfmathresult}\%},
	legend columns=2,legend style={at={(0.5,1.0)},draw=none,anchor=south,
 		font=\Large},
	y tick label style = {font = \Large},
	x tick label style = {font = \Large}
	]
	\addplot+[mark=*,blue,error bars/.cd,
 	x dir=both,y dir=both,y explicit]
 	table[x=block_count,y=accuracy,y error=std,col sep=comma]
 	{./data/block_variation_mnist_viz_priv.csv};
 	\addlegendentry{MNIST}
 	\addplot+[mark=x,red,draw opacity=0.6,error bars/.cd,
 	x dir=both,
 	y dir=both,y explicit]
 	table[x=block_count,y=accuracy,y error=std, col sep=comma]
 	{./data/block_variation_cifar_viz_priv.csv};
 	\addlegendentry{CIFAR-10}
	\addplot+[mark=square,color=black,draw opacity=0.6,error bars/.cd,
 	x dir=both,
 	y dir=both,y explicit]
 	table[x=block_count,y=accuracy,y error=std, col sep=comma]
 	{./data/block_variation_faces_viz_priv.csv};
	\addlegendentry{LFW}
 	\end{axis}
 	\end{tikzpicture}
\begin{tikzpicture}[scale=0.48]
 	\pgfplotsset{every axis legend/.append style={font=\small},every node near coord/.append style={font=\Large}}
 	\begin{axis}
 	[ymin=80.0,ymax=100,
	xlabel={Noise Levels},xlabel style = {font = \Large},
 	point meta ={y*100},
 	ylabel={Avg. Visual Privacy}, ylabel style = {font=\Large},
	xtick=data,
	yticklabel=\pgfmathprintnumber\tick\%,yticklabel={\pgfmathparse{\tick}\pgfmathprintnumber{\pgfmathresult}\%},
	legend columns=2,legend style={at={(0.5,1.0)},draw=none,anchor=south,
 		font=\Large},
	y tick label style = {font = \Large},
	x tick label style = {font = \Large}
	]
	\addplot+[mark=*,blue,error bars/.cd,
 	x dir=both
 	,y dir=both,y explicit]
 	table[x=noise_lvl,y=accuracy,y error=std,col sep=comma]
 	{./data/noise_variation_mnist_viz_priv.csv};
 	\addlegendentry{MNIST}
 	\addplot+[mark=x,red,draw opacity=0.6,error bars/.cd,
 	x dir=both,
 	y dir=both,y explicit]
 	table[x=noise_lvl,y=accuracy,y error=std,col sep=comma]
 	{./data/noise_variation_cifar_viz_priv.csv};
 	\addlegendentry{CIFAR-10.}	
	\addplot+[mark=square,color=black,draw opacity=0.6,error bars/.cd,
 	x dir=both,
 	y dir=both,y explicit]
 	table[x=noise_lvl,y=accuracy,y error=std, col sep=comma]
 	{./data/noise_variation_faces_viz_priv.csv};
 	\addlegendentry{LFW}
 	\end{axis}
 	\end{tikzpicture}
 	\caption{Effect on Visual Privacy:Varying block counts (left). Varying noise levels (right)}
 	\label{fig:noise}
	\vspace{-0.4cm}
\end{figure}
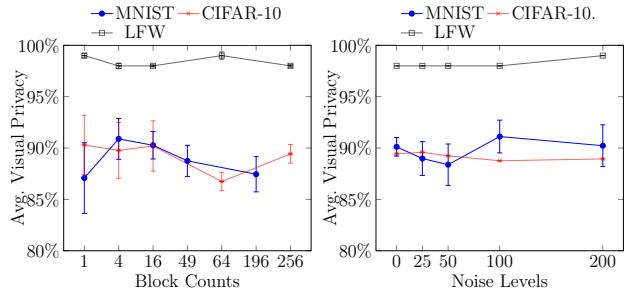

\vspace{-0.4cm}
\subsubsection{Resilience to Class-membership Attacks}
Next, we examine the resilience of our method to the class-membership attack. Following the empirical method described in Section ~\ref{subsec:empirical}, we measure the class-wise Fano factors for the prediction output probabilities for both the in-training and out-training datasets. We use different datasets to test how class-membership attacks perform on the DNN models. Specifically, we partition the datasets by class and then feed the images in the same class into the model, one class at a time. We then summarize the output distribution with the Fano factor. Without applying image disguising, we observe in Figure  ~\ref{fig:fano_org}, the Fano factor values for the in-training classes are clearly distinguishable from those of the out-training classes, with statistically significant margins (p-value $\leq 0.001$). In contrast, for the transformed models, we observe in Figure ~\ref{fig:fano_pert} the in-training classes and out-training classes are not distinguishable from each another - a small difference between average values with p-value $>0.5$. 

\vspace{-0.25cm}
\begin{figure} [h]
\vspace{-0.4cm}
\centering
\includegraphics[width= 0.33\linewidth]{./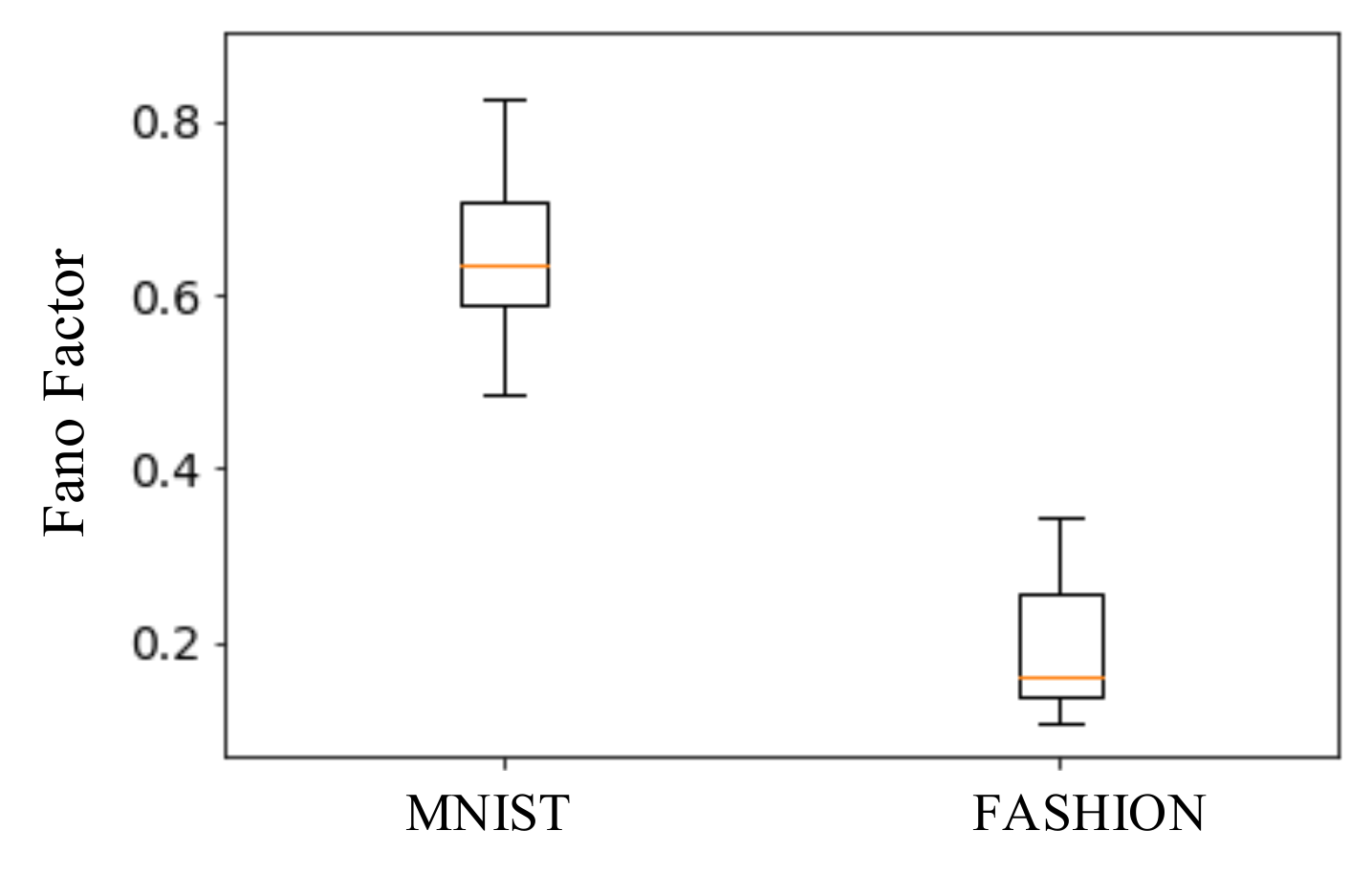}
\includegraphics[width= 0.33\linewidth]{./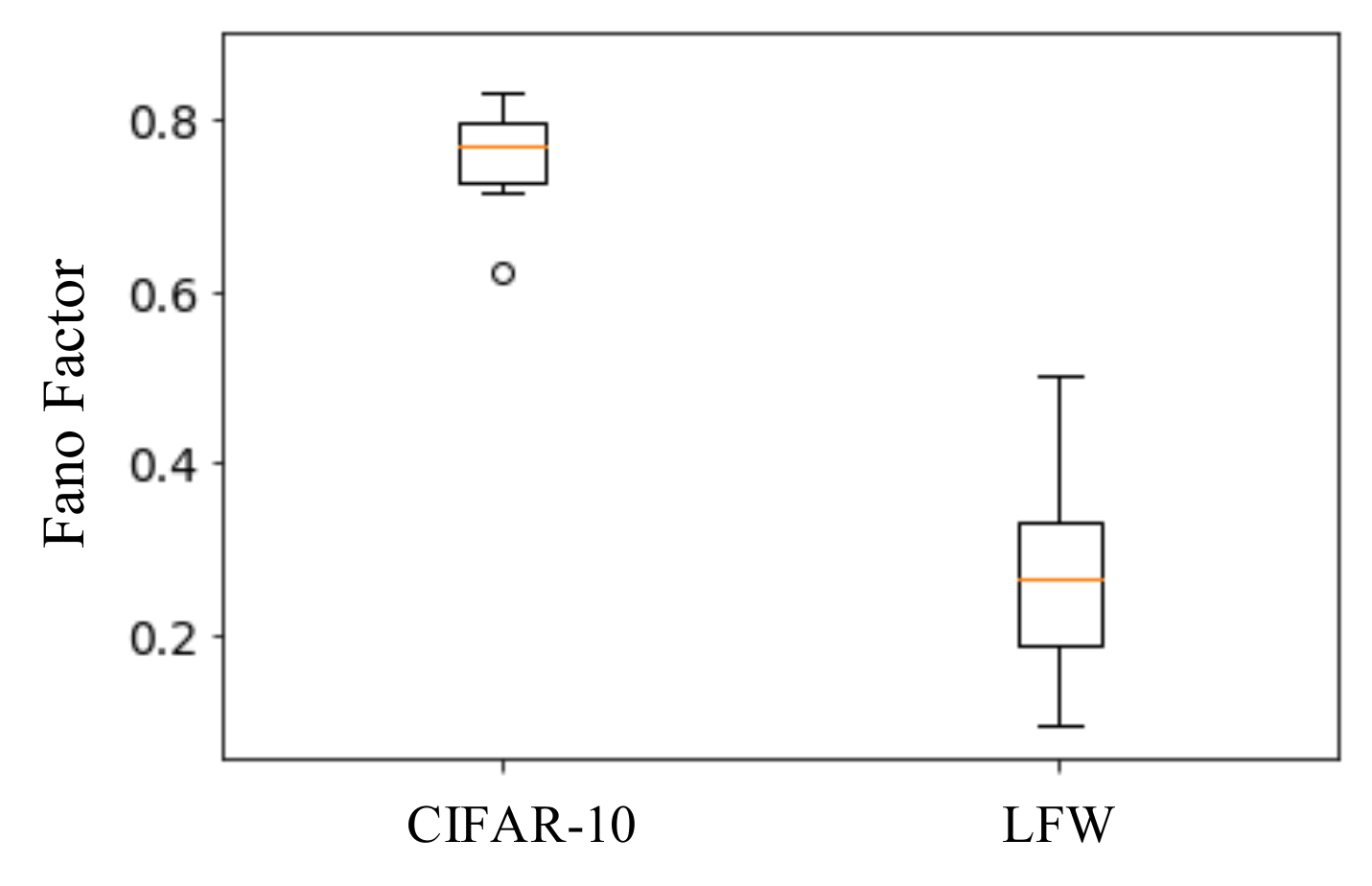}
\vspace{-0.35cm}
\caption{Effective class-membership attack on the unprotected models. In-training class-wise Fano factor is significantly higher. }
\label{fig:fano_org}
\vspace{-0.45cm}
\end{figure}
\vspace{-0.53cm}
\begin{figure} [h]
\vspace{-0.4cm}
\centering
\includegraphics[width=0.33\linewidth]{./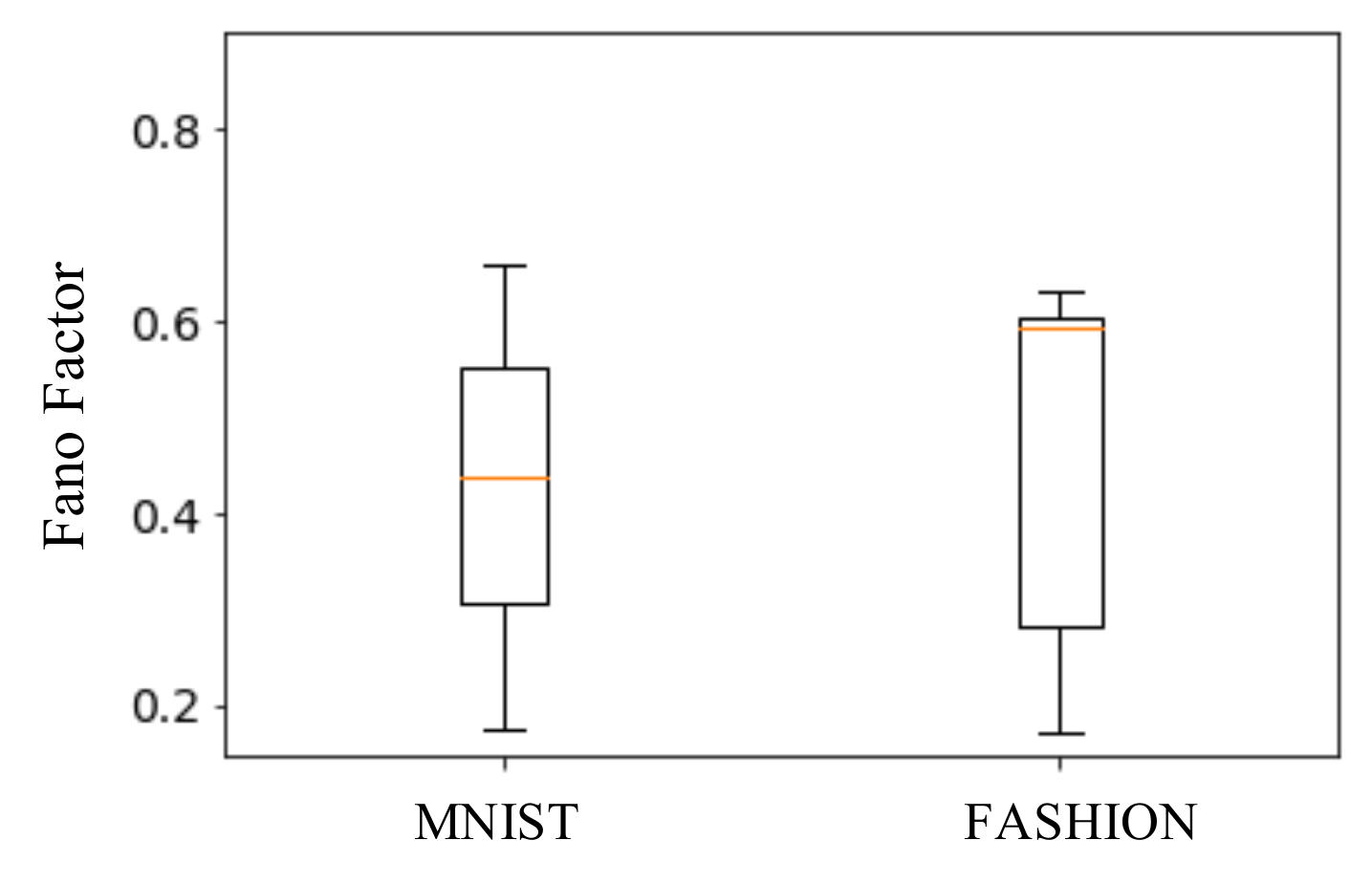}
\includegraphics[width=0.33\linewidth]{./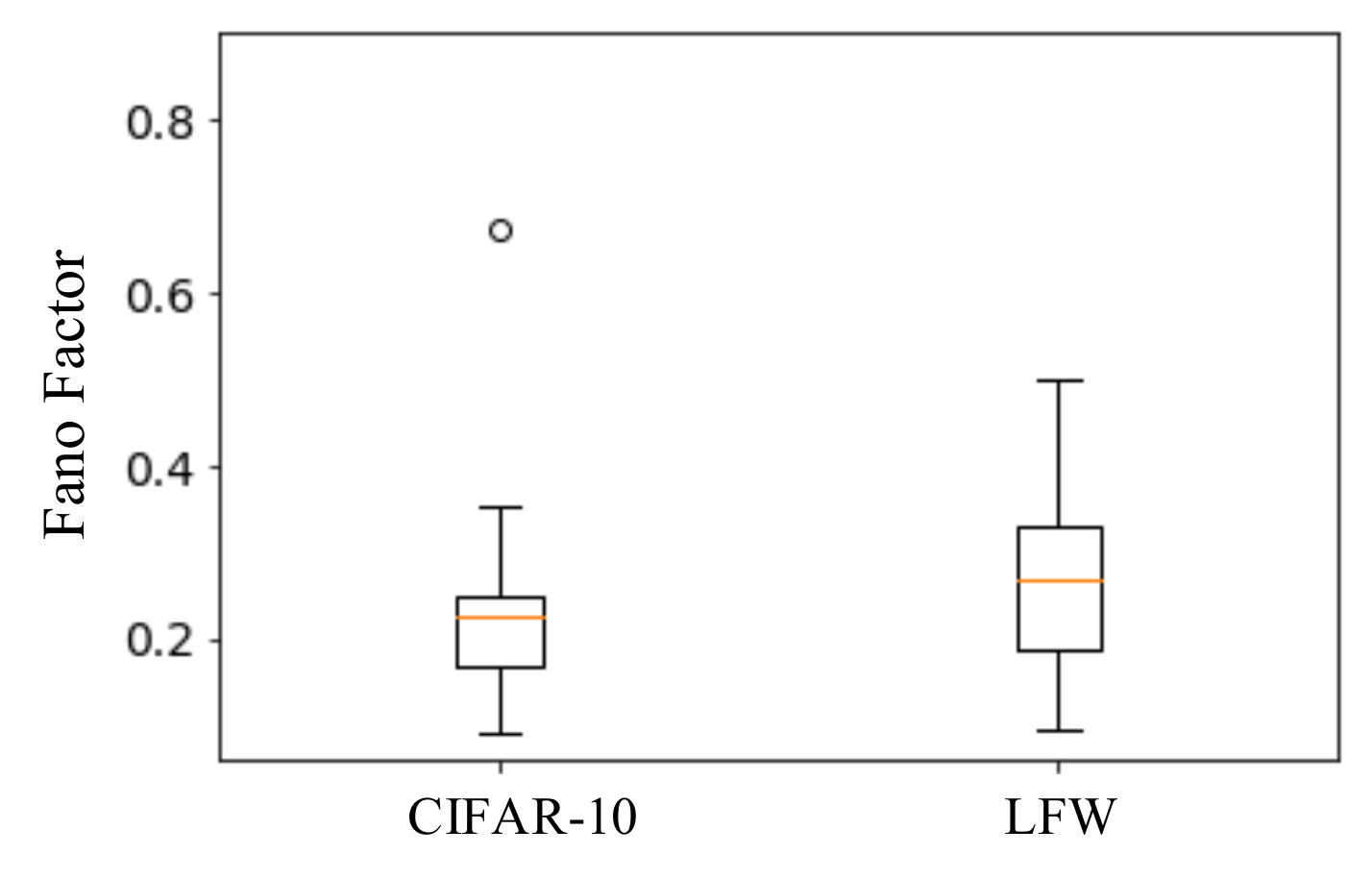}
\vspace{-0.35cm}
\caption{Ineffective class-membership attack on the protected models. In-training class-wise Fano factor is indistinguishable from that of  Out-training. }
\label{fig:fano_pert}
\vspace{-0.3cm}
\end{figure}

\vspace{-0.35cm}

\textbf{Discussion.}
At its current state, Disguised-Nets only considers learning DNN models that classify images to individual labels. It will be important to assess if these results carry over to other learning objectives such as multi-label classification and regression. Furthermore, adapting our disguising mechanism in transfer learning, which has proven extremely useful in building powerful models, is a challenging yet interesting task. 
Lastly, we would like to explore in an expansion of this work, how closely a malicious adversary can estimate the $R_i$ matrices in RMT given some leaked pairs of original and disguised images. 

\vspace{-0.35cm}
\section{Related Work}\label{sec:related_work}
\vspace{-0.4cm}

Fan et al. ~\cite{fan18} applied a differentially private mechanism to hide certain pixels in images for image pixelation, however, the obfuscated images are visually identifiable from the global perspective. Li et al. ~\cite{li17} propose learning shallower neural networks locally by the data owner and sharing the intermediate representation to the cloud for further learning. Mao et al. ~\cite{mao18} propose a similar strategy for face detection problem. They let the data owner evaluate the first layer of the DNN and apply a differentially private noise to the output. However, both approaches reveal the visually identifiable features of the images. 

The most related crypto approach for training DNNs is Mohassel et al. ~\cite{mohassel17} for SGD-based logistic regression and neural networks. It is based on randomized secret sharing, additively homomorphic encryption, and garbled circuits. However, the framework is very expensive even for small-scale neural networks. 

Abadi et al. ~\cite{abadi16} and Shokri et al. \cite{reza15} propose training differentially private DNN models that hides inclusion or exclusion of individual images in the training data from the model consumers with noisy SGD update algorithms. These techniques are unsuitable in the outsourced setting for DNN learning as they do not directly protect the content of the images or the learned models. Similarly, they present a significant trade-off between model quality. 

A set of research focuses on the privacy-preserving evaluation of DNN models, which is easier to build and less costly than privacy-preserving DNN learning frameworks. Nathan et al. ~\cite{xie14} present the homomorphic encryption based CryptoNets framework for evaluating a DNN with encrypted input data. Similarly, Rouhani et al. ~\cite{rouhani18}  propose a garbled circuit based DNN evaluation protocol. 

Our idea of class-membership attack is slightly related to the membership inference addressed by ~\cite{shokri16}, however completely a different concept. Membership inference attack aims to determine inclusion or exclusion of exact data points in the training set whereas class-membership attack determines inclusion or exclusion of a certain kind or category of images in the training dataset. Shokri et al. ~\cite{shokri19} assess the attack specifically on DNN models. With Disguised-Nets, this attack becomes irrelevant as it is impossible for an adversary to design and launch the attack without knowing the exact transformation keys we deploy.

Fredrikson et al. \cite{fredrikson14} show that it is possible to reverse engineer a machine learning model to explore the private training data the model was trained with a model inversion attack (MIA). The success of the MIA attack depends on unrestrained access to the target machine learning models. With a high level of visual privacy and the link between the transformed and original images broken by the RMT parameters, Disguised-Nets need not worry about this attack as the generated images are also in the transformed space.

\vspace{-0.45cm}
\section{Conclusion}\label{sec:conclusion}
\vspace{-0.4cm}
While using cloud resources for deep learning has been an economical option, only a few studies address the related privacy concerns. In this paper, we identify two types of attacks on outsourced deep learning: the visual re-identification attack and the class-membership attack, which none of the existing candidate solutions can satisfactorily address. We propose our image disguising mechanisms: Disguised-Nets for privacy-preserving deep learning in the outsourced setting. It employs a combination of block-wise secret permutation and multidimensional transformations on each image while preserving a certain utility that the deep learning algorithms can pick up. Experimental results show that the Disguised-Nets approach preserves the model quality surprisingly well. It is also resilient to the visual re-identification and the class-membership attacks.  

\vspace{-0.45cm}
\bibliographystyle{abbrv}
\vspace{-0.3cm}
\bibliography{./paper}

\end{document}